\DeclareMathOperator{\Conv}{\mathbf{Conv}}
\DeclareMathOperator{\Poly}{\mathbf{Poly}}
\DeclareMathOperator{\Proj}{\mathbf{Proj}}
\theoremstyle{plain}
\newtheorem{prop}{\textbf{Proposition}}
\theoremstyle{definition}
\theoremstyle{remark}
\newtheorem{rem}{\textbf{Remark}}
\begin{document}

\title{\LARGE \bf
Lidar-based exploration and discretization for mobile robot planning
}
\author{Yuxiao Chen, Andrew Singletary, and Aaron D. Ames
\thanks{The authors are with the Department of Mechanical and Civil Engineering, Caltech,
        Pasadena, CA, 91106, USA. Emails:
        {\tt\small \{chenyx,asinglet,ames\}@caltech.edu}}
}

\maketitle
\begin{abstract}
In robotic applications, the control, and actuation deal with a continuous description of the system and environment, while high-level planning usually works with a discrete description. This paper considers the problem of bridging the low-level control and high-level planning for robotic systems via sensor data. In particular, we propose a discretization algorithm that identifies free polytopes via lidar point cloud data. A transition graph is then constructed where each node corresponds to a free polytope and two nodes are connected with an edge if the two corresponding free polytopes intersect. Furthermore, a distance measure is associated with each edge, which allows for the assessment of quality (or cost) of the transition for high-level planning. For the low-level control, the free polytopes act as a convenient encoding of the environment and allow for the planning of collision-free trajectories that realizes the high-level plan. The results are demonstrated in high-fidelity ROS simulations and experiments with a drone and a Segway.
\end{abstract}
\section{Introduction}\label{sec:intro}
Mobile robotic systems are gaining increasing attention in the past few decades, and their applications can be seen in transportation, exploration of unknown environments, rescue missions post disasters, and surveillance missions. One core functionality is navigation and planning, which has been studied in the literature for more than 40 years. Several important tools include occupancy grid \cite{elfes1989using,murray2000using}, optical flow \cite{dev1997navigation}, potential field \cite{koren1991potential} and roadmap methods \cite{boor1999gaussian,lavalle2001rapidly}. Two closely related problems are mapping and localization since the robot needs to gather information about the environment and localize itself before planning its motion. Existing methods include occupancy grid \cite{elfes1989using,thrun2003learning} and object maps \cite{thrun2002robotic,kostavelis2015semantic}. Simultaneous Localization and Mapping (SLAM) gained enormous popularity by combining localization and mapping \cite{durrant2006simultaneous,cadena2016past}.

While the above-mentioned methods typically deal with a continuous state space (or configuration space), a discrete description of the robot state and task is usually used for high-level planning as it greatly simplifies the problem and serves as an abstraction of the continuous motions \cite{ekvall2008robot}. Moreover, working in a discrete space makes it possible for many powerful planning tools to be applied. One class of tools is the Markov models, including Markov Decision Processes (MDP) and Partially Observed Markov Decision reachability \cite{ong2010planning,zivkovic2006hierarchical,nilsson2018toward}. These models allow for the reasoning of stochastic transitions and typically aim at maximizing the expected reward. Another important class is the temporal logic synthesis tools, which aim at synthesizing controllers that satisfy temporal logic specifications \cite{kress2009temporal,livingston2012backtracking}, e.g., ``the robot should visit an area infinitely often,'' ``once the door opens, the robot should leave the room within 1 minute.''

Most of the existing high-level control synthesis tools assume that a discrete description of the system is given, including the collection of discrete states and the transition relations. However, for mobile robot applications, such a discrete system description is usually not given as the natural operating environment is continuous. One simple approach is to use a grid, such as an occupancy grid, to discretize the space. However, there is a tradeoff between accuracy and scalability. Moreover, the geometry of the grid might not be aligned with the actual free space, causing the low-level control and planning layer difficulty in obtaining the geometry information. There exist tools from the formal methods community that discretizes the state space via reachability analysis \cite{wongpiromsarn2011tulip}, however, this may not be suitable for mobile robot applications since it assumes the environment to be completely known and these methods typically do not scale.
\begin{figure}
    \centering
    \includegraphics[width=1\columnwidth]{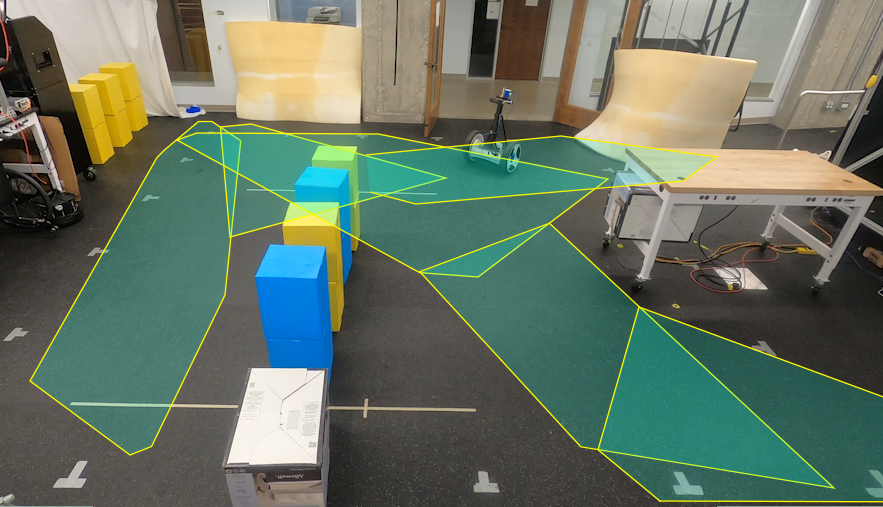}
    \caption{Segway experiment with free polytopes generated from point cloud data}
    \label{fig:intro_segway}
    \vspace{-0.5cm}
\end{figure}

This paper presents a lidar-based exploration and discretization method that explores an unknown environment and generates a discrete representation of the environment in the form of a transition graph for high-level planning. Each discrete state corresponds to a collision-free polytope (referred to as free polytope for the remainder of the paper) and an edge exists between two nodes if the corresponding free polytopes intersect. Once the high-level planning is done on the transition graph, generating a sequence of nodes to visit, the free polytopes then provide the geometry information for the low-level motion planning and control module.

For the remainder of the paper, Section \ref{sec:prelim} present some preliminaries used in the paper, Section \ref{sec:lidar} presents the core functionality, generating free polytopes from lidar point clouds, Section \ref{sec:system} presents the exploration control strategy for the mobile robot, and Section \ref{sec:result} presents the simulation and experiment results.

\section{Preliminaries}\label{sec:prelim}
We first present some preliminaries.

\noindent \textit{\textbf{Nomenclature:}} $\mathbb{R}$ and $\mathcal{R}^n$ denote the real numbers and the $n$-dimensional Euclidean space, respectively. $||x||$ denote the Euclidean distance of a vector $x\in\mathbb{R}^n$. $\Poly(P,q)=\left\{x\mid Px\le q\right\}$ denotes a polytope defined with matrix $P,q$. For a polytope $\mathcal{P}$, $\mathcal{P}.V$ denotes the set of its vertices, $\mathcal{P}.H$ denotes the set of its separating hyperplanes. $\Conv(S)$ denotes the convex hull of a set $S$,  $\Proj_{S}(x)\doteq \mathop{\arg\min}\nolimits_{z\in S}||x-z||$ denotes the projection of the point $x$ onto a set $S$. If $S$ is a polytope, the projection can be obtained with quadratic programming; if $S$ is a union of polytopes, the projection is obtained by projecting $x$ onto every polytope in the union and take the point with minimum distance. For a discrete set $S$, $|S|$ denotes its cardinality. For a compact set $S$, $int(S)$ is its interior, and $\partial S$ denotes its boundary.

\newsec{Transition graphs.} The outcome of the proposed discretization algorithm is a transition graph, which consists of a collection of nodes $V$, a collection of edges $E$, and a distance function $E\to\mathbb{R}$. The nodes in the transition graph are the free polytopes generated from the lidar point cloud, and an edge exists between two nodes if the two free polytopes intersect. Two nodes are connected if there exists a path that connects them. The path of the minimum total distance can be efficiently solved with the Dijkstra algorithm or the $A^*$ algorithm. Later we shall show how we maintain the transition graph while exploring an unknown environment.
\begin{rem}
  The transition graph can be modified to a Markov decision process if the transition probability can be estimated. This may be desirable in cases such as a rover traversing rough terrains.
\end{rem}

\newsec{IRIS.}
The core of the discretization process is to construct free polytopes, i.e., polytopic sets in the state space that are collision-free, from lidar point clouds. We adopt the Iterative Regional Inflation by Semidefinite programming (IRIS) \cite{deits2015computing} algorithm as the main computation engine. There exist other algorithms such as fitting free balls in \cite{gao2016online}, but we found polytopes particularly appealing due to the convenience of encoding the free space in low-level planning, and the free space constructed from IRIS tend to be larger than other algorithms. The IRIS algorithm works by iteratively alternating between two steps: (1) a quadratic program that generates a set of hyperplanes to separate a convex region of space from the set of obstacles and (2) a semidefinite program that finds a maximum-volume ellipsoid inside the polytope intersection of the obstacle-free half-spaces defined by those hyperplanes. The input to the algorithm is a collection of polytopic obstacles and a starting location, and the output is a collision-free polytope. See \cite{deits2015computing} for more details.

\section{Free polytopes from lidar}\label{sec:lidar}

This section presents the core functionality, generating free polytopes from lidar point cloud. There are three main modules of the for this process: (1) preprocessing of the point cloud into polytopic obstacles (2) constructing free polytope with IRIS (3) post-processing of the free polytope.

\newsec{Preprocessing the point cloud.} To begin with, the raw point cloud data need to be filtered to remove the noisy points. In addition, the point cloud needs to be grouped into polytopic obstacles, which is used in the second step by the IRIS algorithm. We combine the two tasks into one algorithm, summarized in Algorithm \ref{alg:preprocessing}. It takes the lidar's position $x_r$ and the point cloud $C$ as inputs, and outputs a collection of polytopic obstacles, where the following subroutines are used:
\begin{itemize}
\item \textsc{Crop\_point\_cloud}($x_r,C,\overline{d}$) crops the point cloud points so that the maximum distance to the robot position $x_r$ is $\overline{d}$.
\item \textsc{Random\_select}($C$) randomly selects a seed point from the point cloud $C$.
\item \textsc{Get\_neighbors}($C,x_s$) returns all neighboring points in $C$ within a ball around $x_s$.
\item \textsc{Regression}($C_s$) performs linear regression to fit a hyperplane to $C_s$ as
\begin{equation*}
\mathop{\min}\limits_{k,c,k_1=1} \sum_{x\in C_s} ||k^\intercal x+c||^2
\end{equation*}
\item \textsc{Grow\_plane}($C,x_s,k,c,\epsilon_1,\epsilon_2||x_r-x_s||$) grows a set $C_p$ consisting of points satisfying $||k^\intercal x+c||<\epsilon_1$ and the minimum distance between $x$ and other points within $C_p$ is smaller than $\epsilon_2$ times $||x_r-x_s||$, the distance between the robot position and the seed point. We make the threshold proportional to the distance from the point cloud to the robot because the distance between points in the point cloud grows linearly with distance to the robot. $C_p$ is initialized as $\{x_s\}$ with only one element, the seed point. Then it iteratively add points in $C$ that satisfies the two conditions until no point can be added.
\end{itemize}

During each iteration, if $|C_p|$ is larger than the threshold $n_3$, its convex hull is added to the obstacle set and $N_{fail}$ drops to zero; otherwise $N_{fail}$ increase by 1. If the procedure fails $n_2$ times in a row, the process is terminated.
\begin{prop}
Algorithm \ref{alg:preprocessing} terminates in finite time
\end{prop}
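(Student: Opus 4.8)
The plan is to show that Algorithm \ref{alg:preprocessing} halts by exhibiting a well-founded quantity that strictly decreases (or a termination counter that forces an exit) across the two nested loops. The key observation is that the algorithm operates on a \emph{finite} point cloud $C$, which after \textsc{Crop\_point\_cloud} remains finite, and that every successful extraction of an obstacle permanently removes points from $C$. So I would first argue that the inner loop --- the repeated \textsc{Grow\_plane} step that accumulates points into $C_p$ --- terminates: $C_p$ is initialized as the singleton $\{x_s\}$, each iteration adds at least one point of the finite set $C$ satisfying the two threshold conditions, and points are never removed from $C_p$, so $|C_p|$ is strictly increasing and bounded above by $|C|$. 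Hence \textsc{Grow\_plane} stops after at most $|C|$ iterations.

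Next I would analyze the outer loop. Here the relevant monotone quantity is the size of the remaining point cloud $C$ together with the failure counter $N_{\mathit{fail}}$. I would distinguish the two cases that occur on each outer iteration. In the success case ($|C_p| > n_3$), the convex hull of $C_p$ is recorded as an obstacle, the grown points are removed from $C$ (so that $|C|$ drops by at least $n_3$), and $N_{\mathit{fail}}$ is reset to zero. In the failure case ($|C_p| \le n_3$), $N_{\mathit{fail}}$ is incremented by $1$ and the process terminates once $N_{\mathit{fail}}$ reaches $n_2$. The plan is to combine these: since a success strictly decreases the finite nonnegative integer $|C|$, there can be only finitely many successes; and between any two successes (or before termination) at most $n_2$ consecutive failures can occur before the $N_{\mathit{fail}} = n_2$ exit condition fires. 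Therefore the total number of outer iterations is bounded by roughly $(1 + n_2)$ times the number of successes plus $n_2$, which is finite.

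Formally I would package this with a lexicographic or lump termination measure such as $(\lceil |C| / n_3 \rceil,\, n_2 - N_{\mathit{fail}})$ on $\mathbb{N}^2$ with the lexicographic order: a success strictly decreases the first component, while a failure keeps the first component fixed and strictly decreases the second, and the second component cannot go negative without triggering termination. Since $\mathbb{N}^2$ under the lexicographic order is well-founded, no infinite descending chain exists, so the outer loop halts. Combining the finite bound on the inner loop with the finite bound on the outer loop, and noting that each subroutine (\textsc{Random\_select}, \textsc{Get\_neighbors}, \textsc{Regression}, convex-hull computation) performs only finitely many operations on finite inputs, the whole algorithm terminates in finite time.

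The main obstacle I anticipate is the bookkeeping around whether grown-and-discarded points can be re-selected in later iterations: if a failed attempt leaves its points in $C$, then a naive argument that $|C|$ always decreases breaks down, and I would need the failure counter $N_{\mathit{fail}}$ (rather than $|C|$ alone) to carry the termination guarantee in the failure case. Making precise exactly which points are removed in the failure case versus the success case --- and confirming that seed points are drawn without eventual repetition or that the $n_2$-failure cutoff is reached regardless --- is the delicate step; the rest is a routine well-foundedness argument.
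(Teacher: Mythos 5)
Your proposal is correct and rests on exactly the same observation as the paper's (one-sentence) proof: at most $n_2$ consecutive failures can occur before the $N_{fail}$ exit fires, and each success removes at least $n_3$ points from the finite set $C$, so only finitely many outer iterations are possible. Your lexicographic measure and the explicit treatment of the inner \textsc{Grow\_plane} loop are just a more careful packaging of that same argument, and your worry about failed attempts leaving points in $C$ is precisely why the bound must route through $N_{fail}$ rather than $|C|$ alone --- which both you and the paper handle correctly.
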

\begin{proof}
For every $n_2$ iterations, at least $n_3$ points are removed from $C$, therefore the while loop terminates in finite interations.
\end{proof}

\begin{algorithm}
    \caption{Preprocessing of point cloud}
    \label{alg:preprocessing}
    \begin{algorithmic}[1] 
        \Procedure{Preprocessing}{$x_r,C,\overline{d},n_1,n_2,n_3,\epsilon_1,\epsilon_2$}
        	\State \textsc{Crop\_point\_cloud}($x_r,C,\overline{d}$)
           	\State $\mathcal{O}\gets\emptyset$,$N_{fail}\gets 0$
           	\While{$|C|\ge n_1$ \textbf{and} $N_{fail}<n_2$}
           	\State$C_{s}\gets$ \textsc{Random\_select}($C$)
           	\State $C_{s}\gets$ \textsc{Get\_neighbors}($C,x_s$)
           	\State $k,c\gets$ \textsc{Regression}($C_s$)
           	\State $C_p\gets$\textsc{Grow\_plane}($C,x_s,k,c,\epsilon_1,\epsilon_2||x_r-x_s||$)
           	\If{$|C_p|>n_3$}
           		\State Add $\Conv (C_p)$ to $\mathcal{O}$, $C\gets C\backslash C_p$
           		\State $N_{fail}\gets 0$
           	\Else
           		\State $N_{fail}\gets N_{fail}+1$
           	\EndIf
           	\EndWhile
           	\State \Return $\mathcal{O}$
        \EndProcedure
    \end{algorithmic}
\end{algorithm}

Algorithm \ref{alg:preprocessing} turns the point cloud data into a collection of polytopic obstacles. In the meantime, noisy points are removed as they are often separated from the rest of the points and thus cannot be merged into any of the polytopic obstacles. We then need to construct an obstacle-free set given the obstacles, which is done by IRIS. The bound for IRIS is chosen as a box that contains all the points in $C$ so that the polytope obtained by IRIS does not ``slip out'' of the point cloud.

\newsec{Postprocessing the free polytopes.} Once the free polytope $\mathcal{P}$ is generated by IRIS, it is postprocessed before added to the graph of connected free polytopes $\mathcal{G}$. First, the polytope is shrunk as follows:
\begin{equation}
\mathcal{P} = \mathcal{P} \ominus \mathcal{B}(0,r),
\end{equation}
where $\ominus$ is the Minkowski difference and $\mathcal{B}(0,r)$ is a ball at the origin with radius $r$, which is the radius of the robot collision circle. This makes $\mathcal{P}$ at least $r$ distance from any obstacle, thus the robot only need to keep its coordinate within $\mathcal{P}$. For polytopes, the shrinking operation is simple. Assume that $\mathcal{P}=\Poly(P,q)$, we have
\begin{equation*}
\begin{aligned}
	\mathcal{P}\ominus \mathcal{B}(0,r)&=\Poly(P,q'),\\
	q_i'&=q_i-r ||P_i||,
	\end{aligned}
\end{equation*}
where $P_i$ is the $i$th row of $P$, $q_i$ is the $i$th entry of $q$.

Before adding $\mathcal{P}$ to the union of free polytopes $\mathcal{FS}=\bigcup\limits_i \mathcal{P}_i$, we would like to reduce the size of intersections between $\mathcal{P}$ and existing free polytopes in $\mathcal{FS}$. This is because each free polytope represents a discrete state in the high-level planning module, and it is preferred that the overlap between the free polytopes is minimized. To do this, a polytope slicing algorithm is developed.

The idea is to add additional separating hyperplanes to the new polytope $\mathcal{P}$ so that (1) no point that is not contained in the existing free polytopes in $\mathcal{FS}$ is removed (2) the overlap between $\mathcal{P}$ and $\mathcal{FS}$ is minimized. Fig. \ref{fig:poly_int} demonstrates one example with $\mathcal{P}_1$ and $\mathcal{P}_2$ intersecting, and $\mathcal{H}$ is the desired hyperplane.
\begin{figure}
\centering
  \includegraphics[width=0.9\columnwidth]{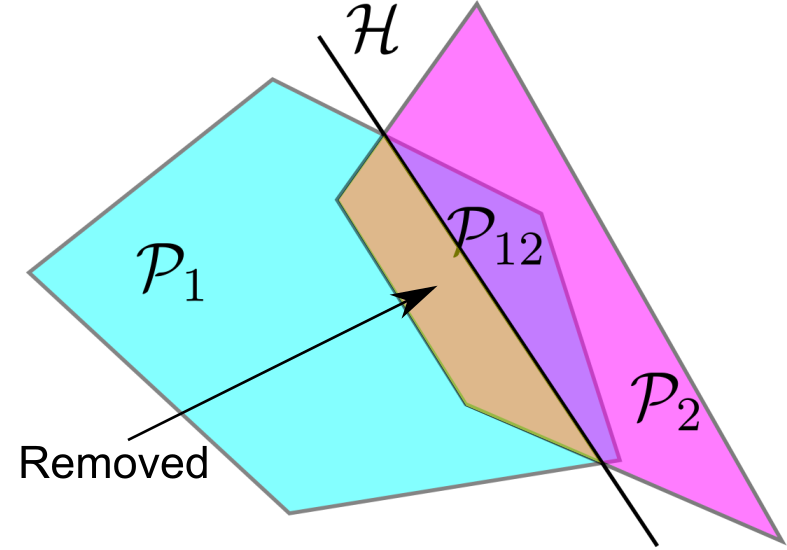}
  \caption{Adding an additional hyperplane $\mathcal{H}$ to $\mathcal{P}_2$ to reduce the overlap between $\mathcal{P}_1$ and $\mathcal{P}_2$.}\label{fig:poly_int}
  \vspace{-0.5cm}
\end{figure}
First observe that in order to keep any point $x\in\mathcal{P}_2,x\notin\mathcal{P}_1$, if we only add one new polytope, the smallest polytope is $\Conv(\mathcal{P}_1\backslash\mathcal{P}_1)$. In fact, $\Conv(\mathcal{P}_1\backslash\mathcal{P}_1)$ turns out to be the set we seek in the simple case where only one new polytope is added. To illustrate the idea, consider the situation depicted in Fig. \ref{fig:poly_int} with two polytopes $\mathcal{P}_1$ and $\mathcal{P}_2$, $\mathcal{P}_2$ being the new polytope to be added. Let $\mathcal{P}_{1,2}\doteq\mathcal{P}_1\cap\mathcal{P}_2$, which is a polytope itself. Its vertices can be categorized into three types: (1) vertices of $\mathcal{P}_1$, characterized as $V_1\doteq \{v\in \mathcal{P}_{1,2}.V|v\in int(\mathcal{P}_2)\}$ (2) vertices of $\mathcal{P}_2$, denoted as $V_2\doteq \{v\in \mathcal{P}_{1,2}.V|v\in int(\mathcal{P}_1)\}$ (3) new vertices generated by the intersection, $V_3\doteq\{v\in \mathcal{P}_{1,2}.V|v\in \partial \mathcal{P}_1\cap\partial\mathcal{P}_2\}$.
\begin{prop}
$\Conv(\mathcal{P}_1\backslash\mathcal{P}_2)$ can be computed as $\Conv(\mathcal{P}_1\backslash\mathcal{P}_2)=\Conv((\mathcal{P}_1.V\backslash V_1) \cup V_3 )$.
\end{prop}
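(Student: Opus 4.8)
The plan is to prove the identity as two inclusions between the compact convex polytopes $R:=\Conv\big((\mathcal{P}_1.V\setminus V_1)\cup V_3\big)$ and $C:=\Conv\big(\mathcal{P}_1\setminus\mathcal{P}_2\big)$, where I work with the closure $\overline{\mathcal{P}_1\setminus\mathcal{P}_2}=\mathcal{P}_1\setminus int(\mathcal{P}_2)$ (taking convex hulls of a bounded set and of its closure yields the same polytope). Throughout I would assume the non-degenerate configuration implicit in the trichotomy $\mathcal{P}_{1,2}.V=V_1\cup V_2\cup V_3$ and drawn in Fig.~\ref{fig:poly_int}: $\mathcal{P}_1,\mathcal{P}_2$ are full dimensional, neither contains the other, and their facets meet transversally. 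This assumption is genuinely needed — in tangential cases such as $\mathcal{P}_1\subseteq\mathcal{P}_2$ the left-hand side is empty while the right-hand side need not be — so the first thing I would do is state it explicitly.

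The inclusion $R\subseteq C$ is the easy direction: it suffices to check that every generator lies in $\overline{\mathcal{P}_1\setminus\mathcal{P}_2}$. A point $v\in\mathcal{P}_1.V\setminus V_1$ is a vertex of $\mathcal{P}_1$ that is not in $int(\mathcal{P}_2)$, hence $v\in\mathcal{P}_1\setminus int(\mathcal{P}_2)=\overline{\mathcal{P}_1\setminus\mathcal{P}_2}$; a point $v\in V_3$ lies in $\mathcal{P}_1\cap\partial\mathcal{P}_2$ and, by transversality, is approached from $\mathcal{P}_1\setminus\mathcal{P}_2$, so it too lies in the closure. Taking convex hulls gives $R\subseteq C$.

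For the reverse inclusion $C\subseteq R$ I would show that every vertex (extreme point) $e$ of $C$ belongs to $(\mathcal{P}_1.V\setminus V_1)\cup V_3$; since an extreme point of the convex hull of a compact set lies in the set, $e\in\mathcal{P}_1\setminus int(\mathcal{P}_2)$, and I classify $e$ by position. If $e\notin\mathcal{P}_2$, then near $e$ the constraints of $\mathcal{P}_2$ are inactive, $C$ agrees locally with $\mathcal{P}_1$, so extremality forces $e$ to be a vertex of $\mathcal{P}_1$; being outside $\mathcal{P}_2$ it is not swallowed, giving $e\in\mathcal{P}_1.V\setminus V_1$. Otherwise $e\in\partial\mathcal{P}_2$, and the decisive tool is the tangent cone $K$ of $\overline{\mathcal{P}_1\setminus\mathcal{P}_2}$ at $e$, namely $K=T_1\cap\big(\mathbb{R}^n\setminus int(T_2)\big)$, where $T_1,T_2$ are the tangent cones of $\mathcal{P}_1,\mathcal{P}_2$ at $e$: the point $e$ is a vertex of $C$ only if $K$ contains no line. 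Checking this criterion pins down the one surviving sub-case — $e$ a vertex of $\mathcal{P}_{1,2}$ lying on $\partial\mathcal{P}_1\cap\partial\mathcal{P}_2$, i.e.\ $e\in V_3$ — and rules out everything else.

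The main obstacle, and the heart of the statement, is exactly this ``no line in $K$'' test at the reflex vertices $V_2$, i.e.\ the vertices of $\mathcal{P}_2$ interior to $\mathcal{P}_1$. There $T_1=\mathbb{R}^n$ and $T_2$ is a pointed cone, so one can pick a direction $d$ with both $\pm d\notin int(T_2)$ (a pointed cone and its negative never cover $\mathbb{R}^n$); then $e\pm\eta d\in\overline{\mathcal{P}_1\setminus\mathcal{P}_2}$ for small $\eta$, the line $\pm d$ lies in $K$, and $e$ is a midpoint of two feasible points — precisely the convex hull ``filling in'' the notch, so $V_2$ never contributes an extreme point. The remaining work is routine: verifying that the same tangent-cone computation excludes relative-interior-of-face points of $\partial\mathcal{P}_2$ and forces $e\in V_3$ on $\partial\mathcal{P}_1\cap\partial\mathcal{P}_2$, and discharging the degenerate tangencies set aside at the start. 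Assembling the two inclusions then yields $C=R$, which is the claim.
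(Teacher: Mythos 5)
Your proposal is correct and follows the same two-inclusion skeleton as the paper's proof, but it does genuinely more work where it matters. The paper disposes of the hard inclusion $\Conv(\mathcal{P}_1\backslash\mathcal{P}_2)\subseteq\Conv((\mathcal{P}_1.V\backslash V_1)\cup V_3)$ by asserting that ``convex hull preserves vertices,'' i.e., that the extreme points of the hull must already lie in $\mathcal{P}_1.V\cup V_3$; this is precisely the step that needs justification, since $\mathcal{P}_1\backslash\mathcal{P}_2$ is non-convex and its closure has additional ``candidate'' extreme points — most notably the reflex vertices $V_2$ and the lower-dimensional faces of $\partial\mathcal{P}_2$ inside $\mathcal{P}_1$. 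Your tangent-cone criterion (an extreme point of $\Conv(S)$ must be a point of $S$ whose local cone $T_1\cap(\mathbb{R}^n\backslash int(T_2))$ contains no line) supplies that justification: it cleanly kills $V_2$ (a pointed $T_2$ and its negative never cover $\mathbb{R}^n$), kills every non-vertex point of $\partial\mathcal{P}_2\cap int(\mathcal{P}_1)$ (the complement of an open half-space contains lines), and in the boundary-boundary case the observation that a line in $T_1\cap T_2$ must lie in $\partial T_2$ forces the survivors into $V_3$. You also correctly flag two issues the paper silently ignores: $V_3\subseteq\mathcal{P}_2$, so one must pass to the closure $\mathcal{P}_1\backslash int(\mathcal{P}_2)$ for the easy inclusion to hold, and the identity fails outright in tangential configurations (e.g., $\mathcal{P}_1\subseteq\mathcal{P}_2$ with touching boundaries leaves the left side empty but not the right), so the transversality hypothesis you add is genuinely required. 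In short, your route proves the lemma the paper only states; the paper's version buys brevity at the cost of rigor. The argument as you outline it is complete once the ``routine'' cone computations are written out, and I found no gap in them.
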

\begin{proof}
First we show $\Conv((\mathcal{P}_1.V\backslash V_1) \cup V_3 )\subseteq \Conv(\mathcal{P}_1\backslash\mathcal{P}_2)$. This is straightforward since the set $\mathcal{P}_1.V\backslash V_1) \cup V_3\subseteq \mathcal{P}_1\backslash\mathcal{P}_2$. To show the other direction, note that convex hull preserves vertices, i.e., the vertices of $\Conv(\mathcal{P}_1\backslash\mathcal{P}_2)$ is a subset of $\mathcal{P}_1.V \cup V_3$. For any $v\in V_1$, since it is a vertice of $\mathcal{P}_1$, one can find a separating hyperplane between $v$ and $\mathcal{P}_1.V\backslash v\cup V_3$. Therefore, $v\notin \Conv(\mathcal{P}_1\backslash \mathcal{P}_2)$. Similarly, all of $V_2$ is not inside $\Conv(\mathcal{P}_2\backslash \mathcal{P}_1)$.
Now consider the potential vertices of $\Conv(\mathcal{P}_1\backslash \mathcal{P}_2)$, we know that $V_1$ is not inside $\Conv(\mathcal{P}_1\backslash \mathcal{P}_2)$, so what is left is $(\mathcal{P}_1.V\backslash V_1) \cup V_3$, and since convex hull preserves vertices,  $\Conv((\mathcal{P}_1\backslash\mathcal{P}_2)\subseteq\Conv(\mathcal{P}_1.V\backslash V_1) \cup V_3)$.
\end{proof}

In practice, the situation can be more complicated. For example, the new free polytope can intersect with multiple existing free polytopes, $\mathcal{P}_1\backslash\mathcal{P}_2$ can be disconnected, in which case further shrinking is possible by splitting $\mathcal{P}_1$ and add multiple new polytopes. However, note that any new separating hyperplane $\mathcal{H}$ that is added to shrink $\mathcal{P}_1$ is always purely determined by some vertices within $V_3$, therefore, we can enumerate all hyperplanes determined by vertices in $V_3$ to shrink and split $\mathcal{P}_1$ into polytopes that preserves all the newly found free space and minimizes overlap. We omit the details here.

\begin{algorithm}
    \caption{Adding new free space}
    \label{alg:adding}
    \begin{algorithmic}[1] 
        \Procedure{Add\_new\_poly}{$\mathcal{P},\mathcal{FS}$}
        	\State $\mathcal{H}s\gets \emptyset$
        	\For{$\mathcal{P}'\in\mathcal{FS}$}
        		\State Calculate $(\mathcal{P}\cap\mathcal{P}').V$, Identify $V_1,V_2,V_3$
        		\State $\mathcal{H}s\gets \mathcal{H}s\cup\{\Conv(V_3).H\}$
        	\EndFor
           	\For{$\mathcal{H}\in\mathcal{H}s$}
           		\If{\textsc{Add\_criteria}($\mathcal{FS},\mathcal{P}\cap\mathcal{H})$}
           			\State $\mathcal{FS}\gets\mathcal{FS}\cup\{\mathcal{P}\cap\mathcal{H}\}$
           		\EndIf
           	\EndFor
        \EndProcedure
    \end{algorithmic}
\end{algorithm}

The process of shrinking a new free polytope and adding to the existing union of free polytopes is summarized in Algorithm \ref{alg:adding}, where $\mathcal{P}\cap\mathcal{H}$ denotes the intersection of $\mathcal{P}$ and the halfspace generated by $\mathcal{H}$, and \textsc{Add\_criteria} is a procedure that determines whether a new polytope should be added to $\mathcal{FS}$ based on whether the new free polytope leads to a sufficient growth of the total free space.

\section{System structure}\label{sec:system}
This section presents the system design for the mobile robot that carries the lidar to perform the discretization task. We do not specify a particular type of mobile robot, but rather treat the robot as a point mass with a radius $r$ that can be ordered to move around with the help of the low-level planning and control layer. In the simulation and experiment section, we use an Unmanned Aerial Vehicle (UAV) and a Segway as the mobile robot.

There are three key components for the mapping and discretization task: (1) high-level planning module (2) low-level planning and control module (3) mapping module

\newsec{High-level planning module.}
The high-level planning module plans the desired waypoint for the mobile robot to visit next, which does not consider the robot dynamics and there is no safety guarantee. We adopt the planner design in \cite{singletary2020safety} which works in tandem with the Octomap library \cite{hornung2013octomap} and utilizes the point cloud data to identify the visited space and the frontier of exploration. To ensure safety, the desired waypoint given by the planner is projected to $\mathcal{FS}$ as the actual waypoint. At the beginning of the operation, it is assumed that the mobile robot starts at an initial condition with sufficient clearance around it. The mobile robot will first identify the first free polytope at the initial position and add it to $\mathcal{FS}$ before it starts moving towards the waypoint given by the high-level planning module.

\newsec{Low-level planning and control module.}
The low-level planning and control module receives the waypoint from the high-level planning module and plans a path towards the waypoint based on the dynamics of the mobile robot. In the Segway example, we tested a Model Predictive Control (MPC) controller and a Linear Quadratic Regulator (LQR)-based controller to plan the trajectory to the waypoint and calculate the torque input for the Segway. In the UAV example, a PD controller is used to track the desired velocity pointing at the waypoint, and a Control Barrier Function (CBF) supervisory controller directly using lidar point clouds runs on top of the PD controller to make sure that the UAV is not colliding with any obstacles, see \cite{singletary2020safety} for detail.

\newsec{Mapping module.}
The mapping module mainly uses the lidar-based free polytope generation described in Section \ref{sec:lidar} with some auxiliary functions.

First, a transition graph $\mathcal{G}$ is maintained with nodes being the free polytopes and an edge exists between two free polytopes if they intersect. For each free space $\mathcal{P}$, its centroid $C_\mathcal{P}$ is also stored with $\mathcal{P}$ in $\mathcal{G}$. Each edge is associated with two objects, the centroid of the intersection of the two free polytopes, and the transition cost. The estimation of the transition cost differs in different applications. We simply take it to be the distance of the path connecting the two centroids that pass through the centroid of the intersection:
\begin{equation*}
d(\mathcal{P}_1,\mathcal{P}_2)=||C_{\mathcal{P}_1}-C_{\mathcal{P}_1\cap\mathcal{P}_2}||+||C_{\mathcal{P}_2}-C_{\mathcal{P}_1\cap\mathcal{P}_2}||.
\end{equation*}
For ground robots such as the Segway, it can depend on information about the terrain and the difficulty of passing. The benefit of storing the centroid of $\mathcal{P}_1\cap \mathcal{P}_2$ is that the straight path from the centroid of $\mathcal{P}_1$ to $C_{\mathcal{P}_1\cap\mathcal{P}_2}$ is guaranteed to lie within $\mathcal{P}_1$, the same is true for $\mathcal{P}_2$. Therefore, as long as the mobile robot can follow straight paths, a collision-free transition from $\mathcal{P}_1$ to $\mathcal{P}_2$ can be achieved.
\begin{figure}
  \centering
  \includegraphics[width=1\columnwidth]{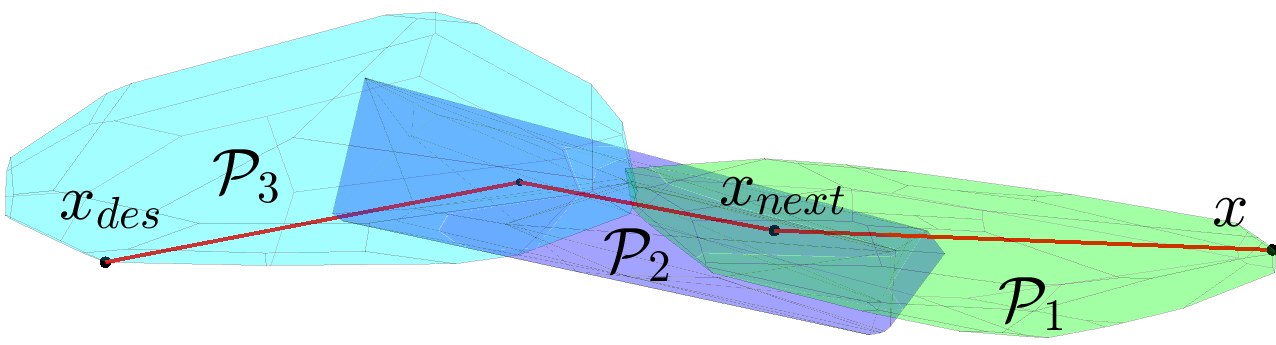}
  \caption{Transition between free polytopes.}\label{fig:nav_poly}
  \vspace{-0.5cm}
\end{figure}

The overall exploration strategy for the robot is summarized in Algorithm \ref{alg:strategy}. $x$ is the current position of the robot, $\mathcal{FS}$ is the free space consisting of multiple free polytopes, and $\mathcal{G}$ is the transition graph. The robot operates in two modes, \textsc{scan} and \textsc{move}. In \textsc{scan} mode, the robot would gather the point cloud and execute subroutines $\textsc{Update\_free\_space}$ and $\textsc{Update\_discrete\_graph}$ to update the free space $\mathcal{FS}$ and the transition graph $\mathcal{G}$. In \textsc{move} mode, the robot would navigate itself to $x_{des}$ while staying inside the free space $\mathcal{FS}$. Since the current position of the robot $x$ and the waypoint $x_{des}$ may not share the same free polytope, we use \textsc{Generate\_x\_next} to generate an intermediate waypoint to navigate the robot to $x_{des}$, which is done with the following 3 steps:
\begin{itemize}
  \item identify free polytopes $\mathcal{P}_1$ and $\mathcal{P}_2$ that contains $x$ and $x_{des}$, respectively,
  \item do Dijkstra search to obtain a discrete path from $\mathcal{P}_1$ to $\mathcal{P}_2$,
  \item if $x\in \mathcal{P}_2$, return $x_{next}=x_{des}$, else, return a point inside the intersection of $\mathcal{P}_1$ and the next free polytope on the discrete path.
\end{itemize}
Fig. \ref{fig:nav_poly} shows an example run of the \textsc{Generate\_x\_next} subroutine where the intermediate waypoint is inside the intersection of $\mathcal{P}_1$ and $\mathcal{P}_2$, which ensures that the straight path from $x$ to $x_{next}$ is within $\mathcal{P}_1$.

This subroutine together with the $\textsc{Navitation}$ ensures that $x$ can reach $x_{des}$ while staying inside $\mathcal{FS}$. When the planner sends a new waypoint $x_{des}$, it is projected onto $\mathcal{FS}$ as the new waypoint for the robot.
\begin{algorithm}
    \caption{Autonomous exploration and discretization}
    \label{alg:strategy}
    \begin{algorithmic}[1]
        \State $\text{mode}\gets \textsc{scan}$, $\mathcal{FS}\gets\emptyset$, $\mathcal{G}\gets (\emptyset,\emptyset)$, $x_{des}\gets x$,
        \While{$\neg$ TERMINATE}
            \If {$\text{mode}==\textsc{scan}$}
                \State $\mathcal{FS}\gets\textsc{Update\_free\_space}(x,\mathcal{FS})$
                \State $\mathcal{G}\gets\textsc{Update\_discrete\_graph}(\mathcal{FS})$
                \State $\text{mode}\gets\textsc{move}$
            \ElsIf{$\text{mode}==\textsc{move}$}
                \State $x_{next}\gets \textsc{Generate\_x\_next}(x,x_{des},\mathcal{G},\mathcal{FS})$
                \State $\textsc{Navigate}(x_{next})$
                \If{$||x-x_{des}||\le\epsilon$}
                    \State $\text{mode}\gets\textsc{scan}$
                \EndIf
            \EndIf
        	\If {New\_waypoint}
                \State $x_{des}\gets\textsc{Receive\_waypoint()}$
                \State $x_{des}\gets\Proj_{\mathcal{FS}}(x_{des})$
            \EndIf
        \EndWhile
    \end{algorithmic}
\end{algorithm}

\begin{rem}
  The free polytopes can also be used by other navigation algorithms. One particular example is the model predictive control, in which the state predictions are constrained inside the union of the free polytopes:
  \begin{equation*}
    \forall t, x(t)\in\bigcup_i \mathcal{P}_i,
  \end{equation*}
where $\bigcup_i \mathcal{P}_i$ is the collection of free polytopes, and each $\mathcal{P}_i=\{x|A_i x\le b_i \}$ is defined with linear constraints. The above constraint can be conveniently encoded as a mixed-integer linear constraint, and be efficiently solved.
\end{rem}

\section{Simulation and experiment results}\label{sec:result}
We demonstrate the proposed algorithm with two examples. The first example is a ROS simulation of a drone carrying a Velodyne lidar exploring a mining cave. The second example is the experiment of a Segway robot carrying a D435 Realsense camera that maps the lab environment.

\subsection{Drone simulation}
The ROS simulation uses a 17-dimensional quadrotor model. The state vector $x = [\mathbf{r}, \mathbf{v}, \mathbf{q}, \mathbf{w}, \mathbf{\Omega}]^\intercal$ where $\mathbf{r}$ is the position $[x,y,z]^\intercal$ in $\mathbb{R}^3$, $\mathbf{v}$ is the velocity $[v_x,v_y,v_z]^\intercal$ in the world frame, $\mathbf{q}$ is the quaternion $[q_w,q_x,q_y,q_z]^\intercal$, $\mathbf{w}$ is the angular velocity vector $[w_x,w_y,w_z]^\intercal$ in the body frame, and $\mathbf{\Omega}$ is the vector of angular velocities of the propellers, $[\Omega_1,\Omega_2,\Omega_3,\Omega_4]^\intercal$. The control input is the voltages applied at the motors $u = [V_1,V_2,V_3,V_4]^\intercal$. The drone is equipped with a Velodyne lidar, which is simulated with the velodyne simulator ROS package \cite{velodyne}. A snapshot of the ROS simulation is shown in Fig. \ref{fig:drone_sim}, where the drone follows the path planned by the planner and the point cloud are demonstrated with colors. The point cloud of the Velodyne simulator is omnidirectional, and is fed to the preprocessing algorithm in Algorithm \ref{alg:preprocessing} to generate the set of obstacles. IRIS then calculates a free space from the set of obstacles.
\begin{figure}
  \centering
  \includegraphics[width=0.8\columnwidth]{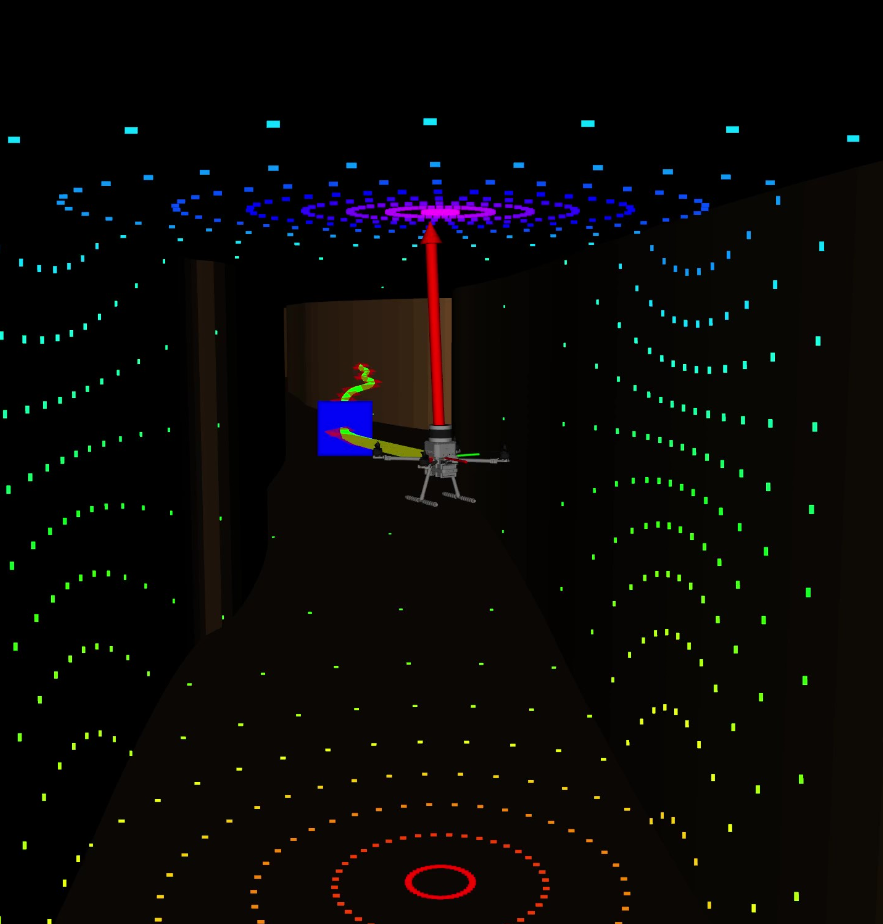}
  \caption{Snapchot of the ROS simulation.}\label{fig:drone_sim}
\end{figure}
Fig. \ref{fig:uav_fs} shows the free space generated by the drone, which contains 136 free polytopes after 18 minutes of run time. The blue curve shows the trajectory of the drone while mapping the mining cave environment. The 136 polytopes form a connected free space $\mathcal{FS}$ where the drone can reach any point in $\mathcal{FS}$ from any other points in $\mathcal{FS}$.
\begin{figure}
  \centering
  \includegraphics[width=1\columnwidth]{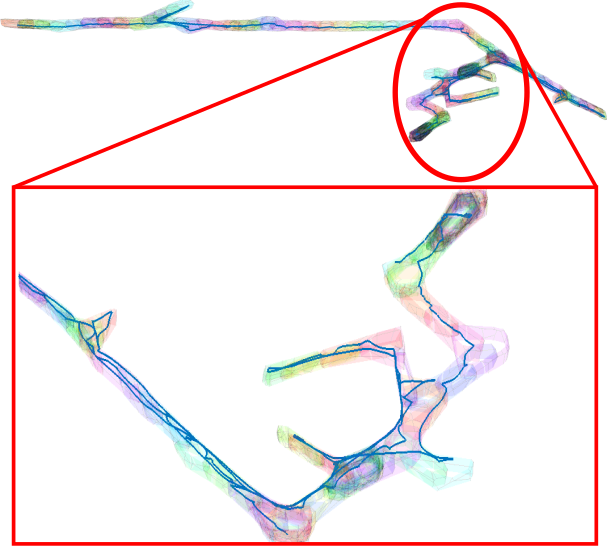}
  \caption{Free polytopes generated in the UAV simulation.}\label{fig:uav_fs}
  \vspace{-0.5cm}
\end{figure}

\subsection{Segway experiment}
The algorithm is also tested with experiments on a Segway platform. The Segway is custom made with 7 states: $x=[X,Y,v,\theta,\dot{\theta},\psi,\dot{\psi}]$, where $X$, $Y$ are the Cartesian coordinates, $v$ is the forward velocity, $\theta$ and $\psi$ are the yaw and pitch angle. The input is the torque of the two motors. The Segway is equipped with a D435 Realsense depth camera with a $86^{\circ}\times57^{\circ}$ field of view
and $10$m range. The D435 camera generates a colored point cloud, yet we treat it as an uncolored point cloud. Since the point cloud is not omnidirectional, the Segway would turn $360^{\circ}$ to obtain point cloud in every direction before updating $\mathcal{FS}$.

We use an extended Kalman filter to obtain the pose estimation of the Segway, which relies on an IMU, two wheel encoders, and a T265 Realsense tracking camera for sensor input. The \textsc{Navigation} module orders the Segway to follow straight lines between its current position and the intermediate waypoint $x_{next}$ by sending velocity and yaw rate command, which is then tracked with a low-level controller designed with LQR.
\begin{figure}
  \centering
  \includegraphics[width=1\columnwidth]{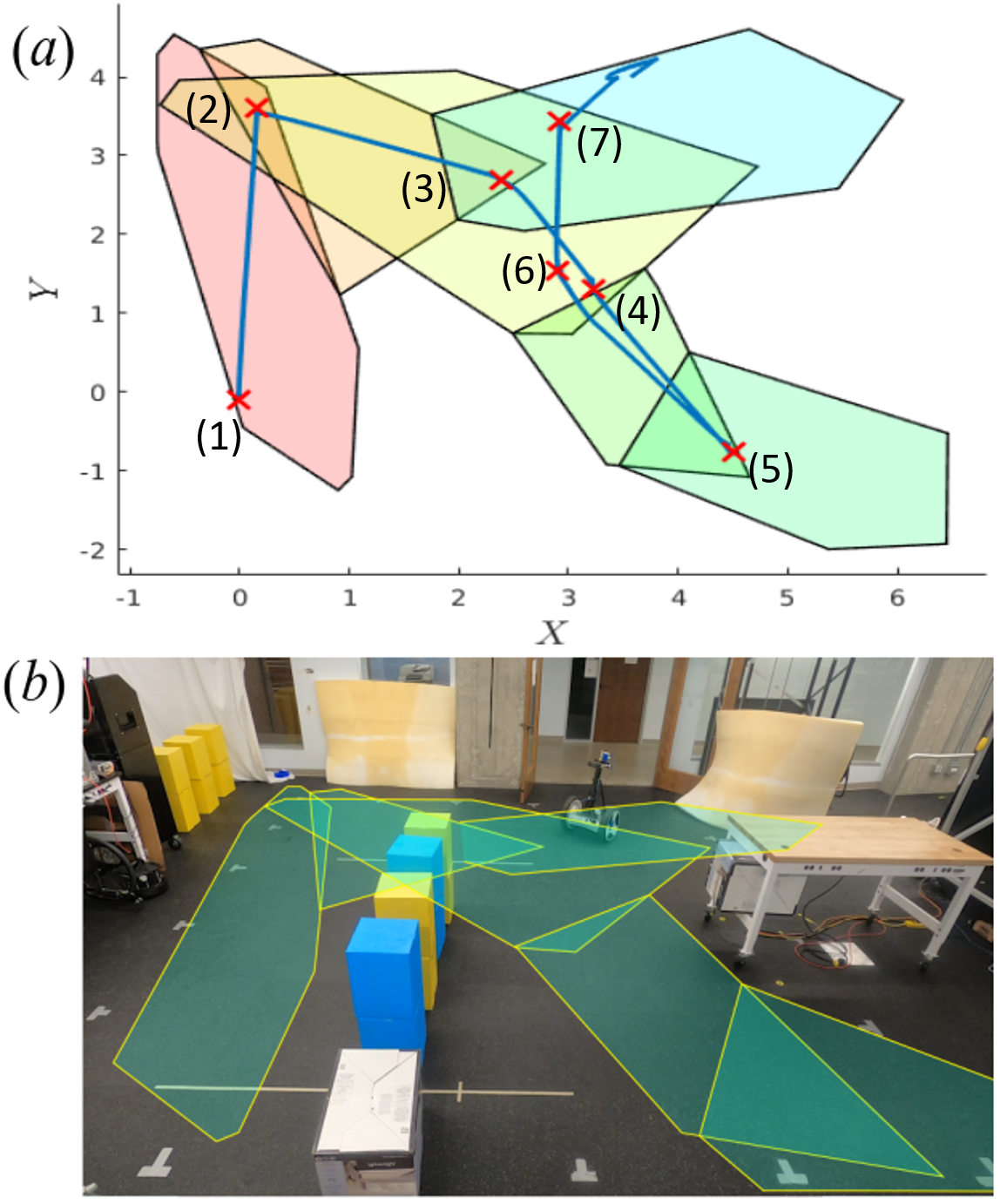}
  \caption{Segway exploration of AMBER lab.}\label{fig:segway_res}
  \vspace{-0.5cm}
\end{figure}

\begin{figure}
  \centering
  \includegraphics[width=1\columnwidth]{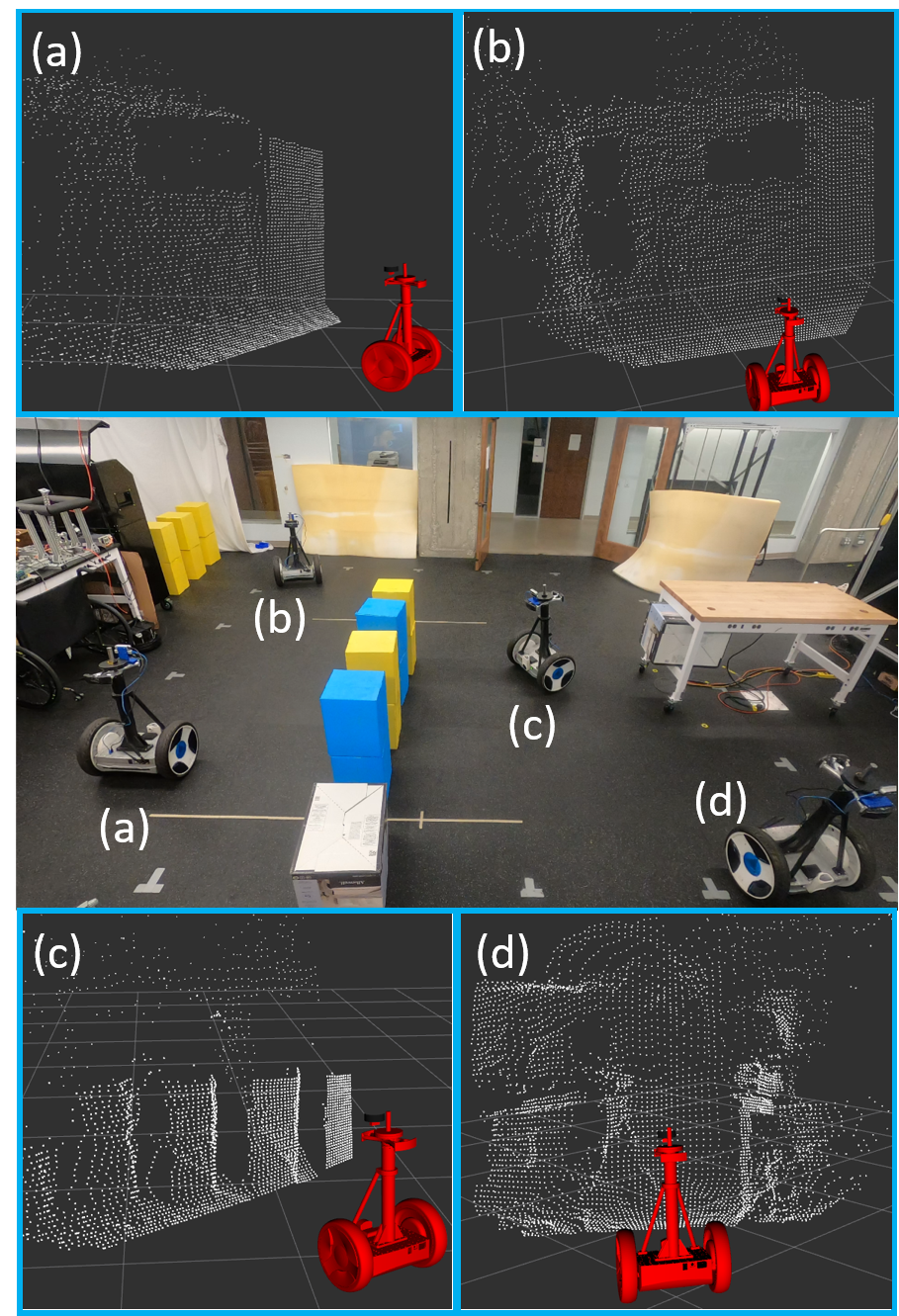}
  \caption{Point cloud data from D435 depth camera during experiment}\label{fig:exp_pc}
\end{figure}

Fig. \ref{fig:segway_res} shows the Segway experiment inside the AMBER lab with obstacles such as tables, boxes, and walls. Fig. \ref{fig:segway_res}(a) shows the free polytopes, the trajectory of the Segway in blue, and the spots of scanning in red crosses with the numbers showing the order of the scans. Fig. \ref{fig:segway_res}(b) shows the free polytopes in the actual environment. The Segway performed 7 scans, resulting in 6 free polytopes (one of the scans did not find free polytopes worthy of adding to $\mathcal{FS}$), and the Segway stayed inside $\mathcal{FS}$ throughout the experiment. A video of the experiment can be found in \href{https://vimeo.com/474287456}{LINK}. It is noticed that one of the free polytopes intersects with the box obstacles, which is probably due to the preprocessing module removing points in the point cloud that are actually obstacles.

Fig. \ref{fig:exp_pc} shows the raw point cloud from the depth camera during the experiment with the Segway position in the middle figure. Snapshot (a) was taken at the beginning of the experiment, (b) was taken before the second scan, (c) was taken during the third scan when the Segway was facing the boxes, and (d) was taken during the fifth scan. Snapshot (b) showed that the laser beams passed through the glass windows and missed the glass surface, which is the reason why the algorithm did not work well when we tested in a hallway surrounded by giant glass windows.

\section{Conclusion and discussion}
We propose a lidar-based exploration and discretization algorithm that generates a collection of free polytopes for mobile robot navigation. The goal is to bridge the gap between high-level planning, which deals with a discrete representation of the environment, and low-level planning, which deals with a continuous representation of the environment. On the high-level, the discretization algorithm generates a transition graph, which can be used by path planning algorithms such as $A^*$ and temporal logic planning tools such as Tulip. With an estimation of transition probability, high-level planning can also be solved as a Markov decision process. On the low-level, the free polytopes are a convenient encoding of the environment geometry, which can be used to plan collision-free trajectories via tools such as Model Predictive Control.

However, we acknowledge that the proposed algorithm needs improvement under the presence of moving obstacles, and existing problems with lidar sensors still exist, such as handling special materials like glass.
\newpage
\balance
\renewcommand{\baselinestretch}{0.94}
\bibliographystyle{myieeetran}
\bibliography{my_bib}
\end{document}